\pdfoutput=1
\documentclass{article}
\usepackage{iclr2027_conference,times}
\iclrfinalcopy

\usepackage{amsmath,amsfonts,bm}

\def\eqref#1{equation~\ref{#1}}

\def\1{\bm{1}}

\DeclareMathAlphabet{\mathsfit}{\encodingdefault}{\sfdefault}{m}{sl}
\SetMathAlphabet{\mathsfit}{bold}{\encodingdefault}{\sfdefault}{bx}{n}

\usepackage{hyperref}
\usepackage{booktabs}
\usepackage{graphicx}
\usepackage{flafter}
\usepackage{amsthm}
\makeatletter
\g@addto@macro\normalsize{%
 \setlength\abovedisplayskip{5pt plus 1pt minus 2pt}%
 \setlength\belowdisplayskip{5pt plus 1pt minus 2pt}%
 \setlength\abovedisplayshortskip{2pt plus 1pt}%
 \setlength\belowdisplayshortskip{3pt plus 1pt minus 1pt}%
}
\def\thm@space@setup{\thm@preskip=4pt plus 1pt minus 1pt \thm@postskip=\thm@preskip}
\makeatother

\newtheorem{proposition}{Proposition}
\newtheorem{assumption}{Assumption}

\title{MM-LeJEPA: Correlated Gaussian Regularization for Multimodal Joint-Embedding Predictive Architectures}

\author{Haowen Bai$^{1}$~~
Rui Gong$^{1}$~~
Meichen Liu$^{1}$~~
Ming-Xuan Weng$^{2}$~~
\textbf{Bihan Wen$^{1}$~~
Kim-Hui Yap$^{1}$~~
}\\[1mm]
$^{1}$Nanyang Technological University\quad \\
$^{2}$Hyundai Motor Group Innovation Center Singapore (HMGICS)\quad
\\[1mm]
\texttt{haowen.bai@hotmail.com}
}

\begin{document}
\maketitle
\lhead{}

\begin{abstract}
{
{LeJEPA derives an isotropic Gaussian embedding target by minimizing{, under its task assumptions, a Fisher information criterion that} controls the task-averaged squared prediction bias.} This enables unimodal self-supervised learning without reconstruction decoders, exponential-moving-average teachers, or contrastive negatives.} {However, its isotropic task prior does not account for the cross-modal structure of {multimodal tasks that read shared content}. We introduce MM-LeJEPA by extending LeJEPA's task assumptions to multimodal learning. Specifically, we model multimodal tasks as strength-weighted combinations of components that read independent information from the two modalities and components that read the same content from both.} Under explicit gradient-moment, correspondence, and symmetry assumptions, we derive a correlated Gaussian target that minimizes a weighted Fisher information functional under a fixed total variance constraint. An analytic transform adapts Sketched Isotropic Gaussian Regularization (SIGReg) to this target, yielding a two-term objective that combines distributional regularization with local-to-global prediction. {The extension adds a single scalar hyperparameter, the target cross-modal correlation, and retains the simplicity of LeJEPA.} {On public audio-visual {classification} benchmarks, our results are competitive with representative open-source methods.} Code will be released.
\end{abstract}

\section{Introduction}

Self-supervised learning aims to learn representations from {unlabeled} data that transfer to unknown downstream tasks. {Across unimodal and multimodal settings, common approaches include contrastive learning~\citep{simclr,moco}, self-distillation~\citep{byol,dino}, and masked reconstruction~\citep{mae,avmae}. Audio-visual methods combine these strategies with cross-modal alignment~\citep{cavmae,mavil}, fine-grained synchronization~\citep{cavmaesync}, and dedicated fusion mechanisms~\citep{mbt,li2023avmst,li2024stft}. While effective, these designs require choices about masking, alignment losses, and encoder sharing, without explicitly deriving a target joint embedding distribution from a prior over downstream tasks. This design and tuning burden makes it harder to develop a simple, reusable multimodal pretraining framework.}

Joint-embedding predictive architectures (JEPAs) provide a self-supervised learning framework that predicts one view's representation from another rather than reconstructing inputs~\citep{lecun2022path,ijepa,vjepa}. This removes the need for an input-reconstruction decoder, but a collapse-prevention mechanism is still required~\citep{ijepa,lejepa,vicreg}. LeJEPA~\citep{lejepa} instead asks what distribution the embeddings should follow. It shows that, when nothing is known in advance about the downstream task and the total variance is fixed, an isotropic Gaussian, which spreads equally in all directions, minimizes a criterion that controls the average prediction bias of downstream probes. {It then uses a lightweight regularizer, Sketched Isotropic Gaussian Regularization (SIGReg), to pull the embeddings toward this distribution, which enables efficient representation learning without a reconstruction decoder, contrastive negatives, stop-gradient, or an exponential-moving-average (EMA) teacher.}

LeJEPA was developed for a single modality, and its task prior treats all embedding directions alike. {Multimodal learning, however, differs from unimodal learning in the structure of downstream tasks.} {With one encoder per modality, the joint embedding splits into an audio block and a video block. Audio and video may observe the same event, so part of the content captured by one encoder is also captured by the other, and a downstream task can read this shared content from both blocks at once.} Applying LeJEPA's isotropic Gaussian target directly to the concatenated audio-visual embedding penalizes exactly the cross-modal correlation that shared content induces, limiting the target's suitability for tasks that read such content.

We therefore propose MM-LeJEPA, a multimodal extension of LeJEPA. {The formulation applies to any pair of modalities; we develop and evaluate it for audio and video{, {the most common co-occurring pair}}.} We single out a {class of tasks} specific to multimodal learning, {those} that read shared content from both modalities, and model downstream tasks as strength-weighted combinations of independent-information and shared-content components. Under assumptions that, like LeJEPA's, favor no feature direction within each modality, this yields a task prior suited to multimodal learning. The optimal embedding distribution under this prior is a correlated Gaussian: each modality remains an isotropic Gaussian, while the audio and video embeddings are correlated through {one} coefficient $\rho$. The target reduces to LeJEPA's isotropic Gaussian when shared content is absent, and its correlation grows with the contribution of shared content. Our contributions are:

\begin{itemize}
 \item {MM-LeJEPA extends LeJEPA to multimodal learning by accounting for both independent and shared information in its task prior.} {The objective retains two terms, and the distributional extension introduces only one target-correlation parameter $\rho$, without requiring reconstruction decoders, EMA teachers, or contrastive negatives.}

 \item We model independent-information and shared-content task components and derive a correlated Gaussian that minimizes the resulting weighted Fisher information functional under explicit assumptions and a total-variance constraint. {A fixed whitening transform makes this target compatible with SIGReg, {and local-to-global prediction preserves content}.}
 \item Comparisons with {open-source methods on public audio-visual benchmarks show {competitive attentive-probing results and} fine-tuning performance.}
\end{itemize}

\section{Related Work}

\subsection{Audio-visual self-supervised pretraining}

Paired audio and video provide natural self-supervision: early work transferred visual knowledge to audio~\citep{soundnet}, predicted audio-image correspondence~\citep{looklistenlearn,objectsthatsound} or temporal synchronization~\citep{avts,multisensory}, and later work added cross-view contrastive learning~\citep{cmc}, cross-modal clustering supervision~\citep{xdc} and refined positive pairs~\citep{avid,robustavid,gdt}.
{Recent methods reconstruct masked audio-visual content: Audiovisual MAE reconstructs both modalities with a shared decoder, trained from scratch on the target dataset~\citep{avmae}; CAV-MAE and MAViL add cross-modal contrastive learning, {and} MAViL also {adds} a self-training stage with targets from a teacher that sees the complete input~\citep{cavmae,mavil}; CAV-MAE Sync aligns individual audio segments with video frames~\citep{cavmaesync}. Others vary the architecture or enforce augmentation consistency: MBT fuses modalities through a few bottleneck tokens~\citep{mbt}, AVSiam shares one Vision Transformer across modalities~\citep{avsiam}, and EquiAV models equivariance to input augmentations~\citep{equiav}.}

These methods encode cross-modal relationships through reconstruction, {alignment} or architectural choices, without deriving a target joint embedding distribution from a prior over downstream tasks. MM-LeJEPA specifies such a distribution and exposes its cross-modal correlation as a scalar $\rho$, and needs no reconstruction decoder, contrastive negatives, {EMA teacher} or {self-training stage}.

\subsection{Joint-embedding predictive architectures and LeJEPA}

{JEPAs predict one view's embedding from another's instead of reconstructing inputs~\citep{lecun2022path}, so the encoder can discard unpredictable low-level detail. I-JEPA applies this to image blocks~\citep{ijepa}, V-JEPA and V-JEPA~2 to video~\citep{vjepa,vjepa2}, A-JEPA to audio~\citep{ajepa}, data2vec to speech, vision and language~\citep{data2vec}, and iBOT learns an online tokenizer by self-distillation~\citep{ibot}. Since latent prediction admits the constant solution, these methods prevent collapse with stop-gradient and a moving-average teacher, as does MJEPA for intra- and cross-modal audio-visual prediction with a modality-shared encoder~\citep{mjepa}. A parallel line uses explicit geometric constraints instead: contrastive repulsion~\citep{simclr,moco}, cross-correlation or variance--covariance penalties~\citep{barlowtwins,vicreg}, alignment and uniformity on the hypersphere~\citep{alignuniform}, and batch-estimated whitening~\citep{wmse}.}

{LeJEPA prevents collapse by using SIGReg to regularize embeddings toward an isotropic Gaussian target~\citep{lejepa}. This target follows from its analysis of task-averaged integrated squared bias under a task-gradient prior and regularity assumptions.} LeVJEPA extends global--local invariance with SIGReg to video~\citep{levjepa}, AV-JEPA combines early fusion and modality dropout with the LeJEPA objective without an EMA teacher~\citep{avjepa}, and Le MuMo JEPA pairs RGB images with depth or thermal images through learnable fusion tokens while keeping the isotropic target~\citep{lemumojepa}.

Rather than applying LeJEPA's isotropic target unchanged, MM-LeJEPA derives a joint target from explicit assumptions about multimodal tasks: a shared-content component adds cross-modal structure to the second moment of the task gradient, and minimizing the resulting weighted Fisher information functional under a fixed total variance yields a correlated Gaussian target.

\section{Method}
\label{sec:method}

\paragraph{Overview.} {{We follow LeJEPA's distribution-design approach.} Section~\ref{sec:method_prelim} {expresses it in a matrix-weighted form:} a prior over downstream tasks fixes the second moment $W$ of the task gradient and yields the weighted Fisher functional $J_W$ that controls the bias of nonlinear probes, and minimizing $J_W$ under a fixed total variance yields the target embedding distribution. Section~\ref{sec:method_av_prior} derives $W$ from a multimodal task prior with a shared-content component, Section~\ref{sec:method_target} minimizes the corresponding functional $J_{W_{\mathrm{AV}}}$ and obtains a correlated Gaussian target with a single parameter $\rho$, and Section~\ref{sec:method_sigreg} enforces this target with SIGReg through a fixed whitening transform.}

\subsection{Preliminaries: LeJEPA's assumptions about downstream tasks}
\label{sec:method_prelim}

{Let} $\mathbf{Z}\in\mathbb{R}^{N\times K}$ {denote} the matrix of $N$ embeddings produced by the encoder $f_{\theta}$ from the inputs $\mathbf{x}_n$, where each row is a $K$-dimensional embedding $f_{\theta}(\mathbf{x}_n)^{\top}$. {We write $p_z$ for the distribution of a single embedding $\mathbf{z}=f_{\theta}(\mathbf{x})$.} {As in LeJEPA}, $p_z$ has zero mean and covariance $\Sigma\succ0$. {The unknown labels of a downstream task {are given by} a target function $\eta:\mathbb{R}^{K}\to\mathbb{R}$, with gradient $\nabla\eta(\mathbf{q})$ at a query point $\mathbf{q}$. This gradient describes the task's {local} sensitivity along each {embedding direction}.}

\paragraph{LeJEPA's objective.} {We express LeJEPA's probe-bias analysis in a matrix-weighted form for task-averaged integrated squared bias (ISB).} {The analysis yields a quantity that depends on the embedding distribution and controls the ISB; {{minimizing it under the variance constraint gives the optimal distribution for this criterion}. We first obtain this quantity, $J_W$.}}

We first consider a task prior {fixed independently of} the candidate representation distributions, and assume the task gradient has a zero first moment and a common positive-definite second moment:
\begin{equation}
 \mathbb{E}_{\eta}\!\left[\nabla\eta(\mathbf{q})\right]=0,
 \qquad
 \mathbb{E}_{\eta}\!\left[\nabla\eta(\mathbf{q})\nabla\eta(\mathbf{q})^{\top}\right]=W\succ0.
 \label{eq:m_task_moments}
\end{equation}
Both conditions hold with the same $W$ for almost every $\mathbf{q}$, with finite second moments; a zero mean {indicates} no preferred signed gradient direction, and the form of $W$ is left unspecified.

Let $\mathcal{I}(p_z):=\mathbb{E}_{\mathbf{z}\sim p_z}\!\left[\nabla\log p_z(\mathbf{z})\,\nabla\log p_z(\mathbf{z})^{\top}\right]$ be the Fisher information matrix. We use the weighted Fisher information functional below as a surrogate objective, with weights given by the task-gradient second moment $W$; the {second equality} is proved in Appendix~\ref{app:proof_jw}:
\begin{equation}
 J_W(p_z)
 :=
 \int_{\mathbb{R}^{K}}
 \left(\nabla\log p_z(\mathbf{z})\right)^{\top}W\left(\nabla\log p_z(\mathbf{z})\right)p_z(\mathbf{z})\,\mathrm{d}\mathbf{z}
 =\operatorname{tr}\!\left(W\mathcal{I}(p_z)\right).
 \label{eq:m_jw}
\end{equation}

{Following LeJEPA's task setting and derivations, we consider two {nonlinear probes}: radius-based $k$-NN~\citep{cover1967nearest} and the Nadaraya--Watson kernel probe~\citep{nadaraya1964estimating,watson1964smooth}. Under the {task-gradient {and curvature} moment assumptions and each probe's regularity conditions}, this functional controls their task-averaged ISB over queries $\mathbf{q}\sim p_z$:}
\begin{equation}
 \mathrm{ISB}_{k\mathrm{NN}}={c_{k\mathrm{NN},0}+c_{k\mathrm{NN},1}}\,J_W(p_z)+\varepsilon_{k\mathrm{NN}},
 \qquad
 \mathrm{ISB}_{\mathrm{ker}}\le c_{\mathrm{ker},0}+c_{\mathrm{ker},1}\,J_W(p_z)+\varepsilon_{\mathrm{ker}}.
 \label{eq:m_probes}
\end{equation}
{{Both relations are derived} in Appendix~\ref{app:proof_probes}.} Each $c$ is a nonnegative constant independent of the candidate distribution's shape, and each $\varepsilon$ is an infinitesimal term; the distribution-dependent design term is $J_W(p_z)$.

\paragraph{Optimizing the control quantity $J_W$.} {{Following LeJEPA's distribution-design strategy, we minimize} $J_W(p_z)$, {which controls the task-averaged ISB, to obtain} the optimal shape of the embedding distribution.}

LeJEPA further assumes an isotropic second moment to model unknown downstream tasks without favoring any gradient direction. That is, $W$ in Eq.~(\ref{eq:m_task_moments}) is taken to be $W=\tau_g^{2}I_K$ with $0<\tau_g^{2}<\infty$.

{ Under this isotropic prior, Eq.~(\ref{eq:m_jw}) reduces to $J_W(p_z)=\tau_g^{2}\operatorname{tr}\mathcal{I}(p_z)=\tau_g^{2}J(p_z)$, {where} {$J(p_z):=\operatorname{tr}\mathcal{I}(p_z)=\int_{\mathbb{R}^{K}}\|\nabla\log p_z(\mathbf{z})\|^{2}p_z(\mathbf{z})\,\mathrm{d}\mathbf{z}$} is the Fisher information functional of LeJEPA. LeJEPA {optimizes} over all distributions of $\mathbf{z}$ subject to a fixed total-variance constraint.} Taking that constraint to be $\operatorname{tr}(\Sigma)=K$, \textit{i.e.}\ unit average coordinate variance,
\begin{equation}
 J_W(p_z)=\tau_g^{2}J(p_z)\;\ge\;\tau_g^{2}\operatorname{tr}\!\left(\Sigma^{-1}\right)\;\ge\;\tau_g^{2}K.
 \label{eq:m_two_ineq}
\end{equation}

The two inequalities hold with equality if and only if $p_z=\mathcal{N}(0,\Sigma)$ and $\Sigma=I_K$, respectively; proof references for both inequalities and their equality conditions are given in Appendix~\ref{app:proof_isotropic}. Therefore, when $J_W(p_z)$ attains its minimum, $p_z$ must satisfy both conditions, \textit{i.e.}, $p_z^{*}=\mathcal{N}(0,I_K)$.

\paragraph{The constraint in LeJEPA.} LeJEPA then constrains the embeddings with SIGReg, which {drives} the distribution to the standard Gaussian by matching {random} one-dimensional projections { with a univariate normality test~\citep{epps1983}}. A single weight combines this distributional term with a prediction loss that pulls {an input's crop embeddings} toward the mean embedding of its global crops, similar to the local-to-global correspondence in multi-crop training~\citep{swav,dino}. {The resulting {objective needs} no reconstruction decoder, EMA teacher, or contrastive negatives: prediction preserves content, while SIGReg constrains the distribution.}

{\paragraph{Summary.} The target is thus governed by the second moment $W$ of the task prior: an isotropic $W$ gives an isotropic Gaussian, and a prior that couples two modalities changes the target only through $W$, which the next subsections derive for multimodal tasks.\par}

\subsection{{Multimodal} task prior and the second moment of the joint task gradient}
\label{sec:method_av_prior}
{Figure~\ref{fig:mmlejepa_pipeline} summarizes MM-LeJEPA {from view construction to prediction and regularization}.}

\paragraph{From unimodal to multimodal tasks.} {{Section~\ref{sec:method_prelim} shows that the target is fixed by the second moment $W$ of the task gradient, and LeJEPA takes $W$ isotropic because nothing is known about the task. With two modalities this choice is no longer neutral. The coordinates of the joint embedding come from two encoders, and audio and video often observe the same event, so a task that reads such shared content has gradients on the two sides that vary together; its second moment has cross-modal blocks that an isotropic $W$ sets to zero. {LeJEPA's isotropic target would} therefore penalize exactly this correlation. We keep LeJEPA's route, a task prior that fixes $W$ and {the induced functional $J_W$}, and change only the prior so that $W$ carries the cross-modal structure of shared-content tasks.}}

\begin{figure}[!t]
 \centering
 \vspace{-4em}
 \includegraphics[width=\linewidth]{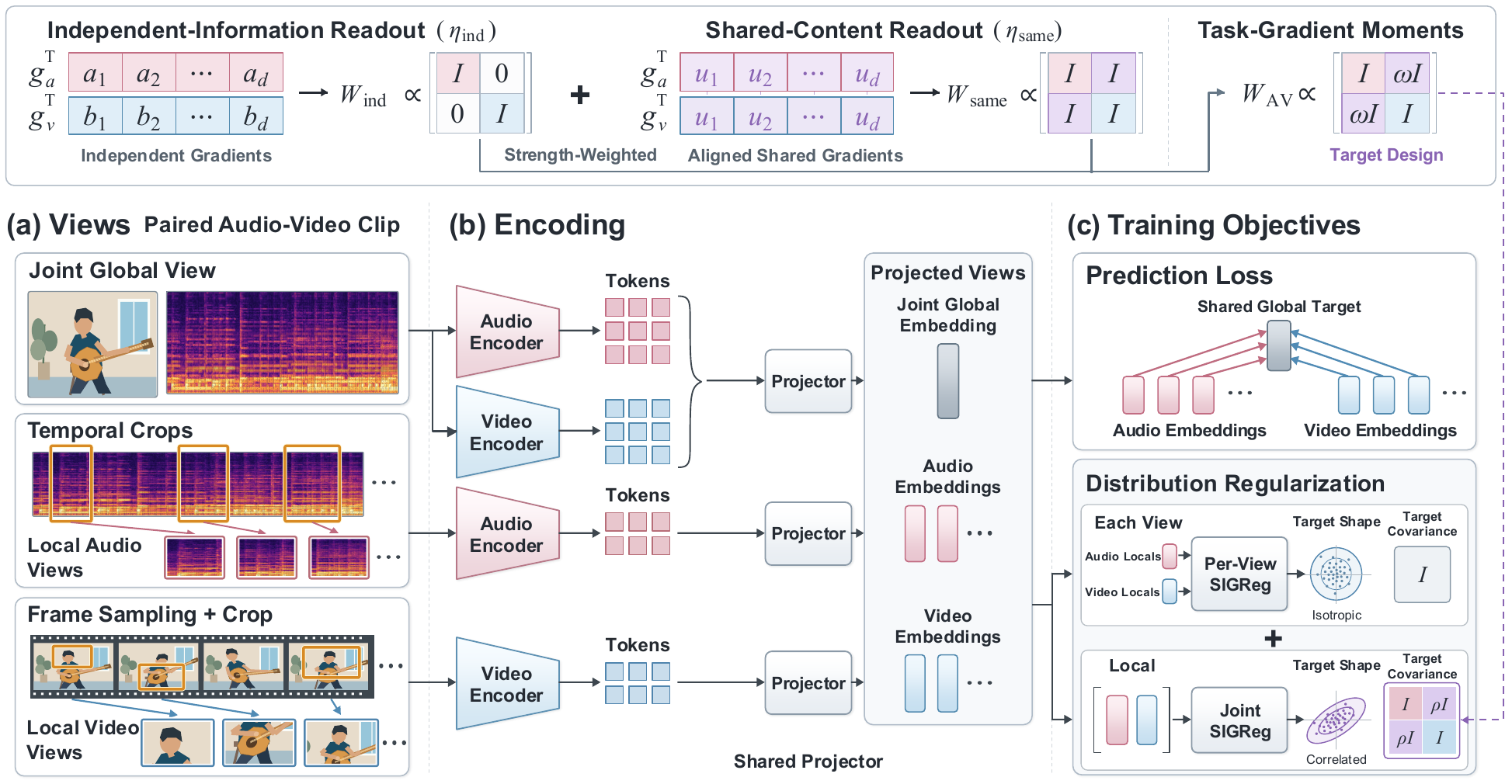}
 \vspace{-2em}
 \caption{{Overview of MM-LeJEPA. {Top: the independent-information and shared-content readouts give the task-gradient second moments $W_{\mathrm{ind}}$ and $W_{\mathrm{same}}$, whose {weighted} combination $W_{\mathrm{AV}}$ determines the correlated joint target.} {Audio crops and video frames {give} local views alongside a joint global view.} Modality encoders and a shared projector produce embeddings for local-to-global prediction and distributional regularization with isotropic per-view and correlated joint targets.}}
 \label{fig:mmlejepa_pipeline}
\end{figure}

\paragraph{Multimodal task prior.} {{As in LeJEPA, we minimize $J_W$ to control the ISB of Eq.~(\ref{eq:m_probes}).} This section constructs the task-prior second moment $W_{\mathrm{AV}}$ for multimodal tasks. {Multimodal tasks differ from unimodal ones in that some read the same information from both modalities at once.}}

{For an audio-visual task, let the projected audio and video embeddings be $\mathbf{e}_a,\mathbf{e}_v\in\mathbb{R}^{d}$ and their joint embedding be $\mathbf{z}=[\mathbf{e}_a;\mathbf{e}_v]\in\mathbb{R}^{2d}$. At a fixed query point $\mathbf{q}$, we split the task gradient into blocks with respect to $\mathbf{e}_a$ and $\mathbf{e}_v$:}
\begin{equation}
  \nabla\eta(\mathbf{q})=\left[\nabla_a\eta^{\top},\ \nabla_v\eta^{\top}\right]^{\top},
  \qquad \nabla_a\eta,\nabla_v\eta\in\mathbb{R}^{d}.
\end{equation}

Audio and video can {provide} different observations of the same content. We model a downstream task through independent-information and shared-content components, combining their gradient contributions with different strengths:
\begin{equation}
 \nabla\eta=\sqrt{1-\pi}\,\nabla\eta_{\mathrm{ind}}
 +\sqrt{\pi}\,\nabla\eta_{\mathrm{same}},
 \qquad 0\le\pi<1 .
 \label{eq:m_task_components}
\end{equation}

Here $\eta_{\mathrm{ind}}$ reads independent information from the two modalities {and} $\eta_{\mathrm{same}}$ reads the same content from both{; the task prior is their combination, and $\pi$ is a normalized squared-amplitude weight}.

{For example, identifying a guitar from its sound and image reads shared content, {whereas recognizing background speech from audio and the shirt color from video reads independent information.}}

\paragraph{Fixed correspondence of multimodal content.} The same content can be read out from both audio and video, but the directions and scales in which the two encoders represent it are arbitrary, and this arbitrariness carries over to the task distribution through the relation between the audio and video gradients of a shared-content task. To derive a target from the task prior, we therefore assume that this relation is fixed by how the two encoders represent the content and does not depend on the task; a single change of coordinates then aligns the two sides so that the gradient of shared content is the same in both modalities, and the task prior is formulated in {terms of} this fixed relation.

\begin{assumption}[fixed correspondence of shared content]
\label{as:correspondence}
{There exist a fixed orthogonal {$R\in O(d)$ and a fixed positive diagonal $\Lambda\succ0$, independent of the task and the query point}, such that}
\begin{equation}
 \nabla_v\eta_{\mathrm{same}}=\Lambda R\,\nabla_a\eta_{\mathrm{same}}.
\end{equation}
\end{assumption}

\begingroup
\paragraph{Change of coordinates for a simpler form.} An invertible change of coordinates leaves every task readout unchanged and maps each candidate embedding distribution to exactly one distribution in the new coordinates. {We formulate the distribution-design problem in {the common coordinates defined below}.}
In Assumption~\ref{as:correspondence}, $R$ describes the directional correspondence of shared-content readouts, while $\Lambda$ accounts for their coordinate-wise scaling. We absorb these fixed differences into aligned video coordinates $\widetilde{\mathbf{e}}_v=R^{\top}\Lambda\mathbf{e}_v$. The same shared-content readout, expressed in $(\mathbf{e}_a,\widetilde{\mathbf{e}}_v)$, then satisfies
\begin{equation}
 \nabla_{\widetilde{\mathbf{e}}_v}\eta_{\mathrm{same}}
 =\nabla_a\eta_{\mathrm{same}}.
 \label{eq:m_aligned}
\end{equation}
All subsequent moments, probe calculations and variance constraints are defined in these common coordinates, and we henceforth write $\mathbf{e}_v$ for the aligned video coordinates; no per-task gradient correspondence is required for the independent-information component.
\par
\endgroup

\paragraph{General assumptions on multimodal tasks.} As in LeJEPA, the downstream task is not known in advance, so we place no preference on the direction of the task gradient: in the common coordinates, the gradient of each component has zero mean and an isotropic second moment within each modality, with the same strength on the two sides so that no modality is preferred, and the two components read {mutually independent information}. {The shared-content component adds cross-modal structure under the correspondence and independence assumptions.}

\begin{assumption}[moments and independence of task gradients]
\label{as:generality}

For $m,m'\in\{a,v\}$, the independent-information component has independent modality gradients with
\begin{equation}
 \mathbb{E}_{\eta}[\nabla_m\eta_{\mathrm{ind}}]=0,
 \qquad
 \mathbb{E}_{\eta}[\nabla_m\eta_{\mathrm{ind}}\,\nabla_{m'}\eta_{\mathrm{ind}}^{\top}]=\delta_{mm'}\,\tau_{\mathrm{ind}}^{2}I_d,
\end{equation}
where $\delta_{mm'}$ is the Kronecker delta. The shared-content component has gradients that coincide in the common coordinates by Eq.~(\ref{eq:m_aligned}), with
\begin{equation}
 \mathbb{E}_{\eta}[\nabla_m\eta_{\mathrm{same}}]=0,
 \qquad
 \mathbb{E}_{\eta}[\nabla_m\eta_{\mathrm{same}}\,\nabla_{m'}\eta_{\mathrm{same}}^{\top}]=\tau_{\mathrm{same}}^{2}I_d.
\end{equation}
The two components are independent, hence $\mathbb{E}_{\eta}[\nabla\eta_{\mathrm{ind}}\nabla\eta_{\mathrm{same}}^{\top}]=0$. Here $0<\tau_{\mathrm{ind}}^{2},\tau_{\mathrm{same}}^{2}<\infty$.
\end{assumption}

Under the assumptions above, {let $W_{\mathrm{AV}}:=\mathbb{E}_{\eta}[\nabla\eta\nabla\eta^{\top}]$ be the joint task-gradient second moment}.

\begin{proposition}
\label{prop:constant_spectrum}
Under Assumptions~\ref{as:correspondence} and~\ref{as:generality}, the component gradient second moments are
\begin{equation}
 W_{\mathrm{ind}}=\tau_{\mathrm{ind}}^{2}
 \begin{bmatrix}I_d&0\\ 0&I_d\end{bmatrix},
 \qquad
 W_{\mathrm{same}}=\tau_{\mathrm{same}}^{2}
 \begin{bmatrix}I_d&I_d\\ I_d&I_d\end{bmatrix},
 \label{eq:m_two_classes}
\end{equation}
so that the second moment of the joint task gradient is
\begin{equation}
 W_{\mathrm{AV}}=(1-\pi)W_{\mathrm{ind}}+\pi W_{\mathrm{same}}
 =\tau_{\mathrm{AV}}^{2}\begin{bmatrix}I_d&\omega I_d\\ \omega I_d&I_d\end{bmatrix},
 \label{eq:m_wav}
\end{equation}
with $\tau_{\mathrm{AV}}^{2}=(1-\pi)\tau_{\mathrm{ind}}^{2}+\pi\tau_{\mathrm{same}}^{2}>0$ and $\omega=\pi\tau_{\mathrm{same}}^{2}/\tau_{\mathrm{AV}}^{2}\in[0,1)$.
\end{proposition}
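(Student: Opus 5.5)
The proof is a direct block computation of second moments in the common (aligned) coordinates fixed by Assumption~\ref{as:correspondence} and Eq.~(\ref{eq:m_aligned}).

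First compute $W_{\mathrm{ind}}:=\mathbb{E}_{\eta}[\nabla\eta_{\mathrm{ind}}\nabla\eta_{\mathrm{ind}}^{\top}]$. Write $\nabla\eta_{\mathrm{ind}}=[\nabla_a\eta_{\mathrm{ind}};\nabla_v\eta_{\mathrm{ind}}]$. The $(m,m')$ block of the outer product is $\nabla_m\eta_{\mathrm{ind}}\nabla_{m'}\eta_{\mathrm{ind}}^{\top}$. By Assumption~\ref{as:generality} its expectation is $\delta_{mm'}\tau_{\mathrm{ind}}^{2}I_d$, which gives the block-diagonal form in Eq.~(\ref{eq:m_two_classes}).

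Next compute $W_{\mathrm{same}}$ the same way. By Eq.~(\ref{eq:m_aligned}), $\nabla_v\eta_{\mathrm{same}}=\nabla_a\eta_{\mathrm{same}}$, so all four blocks equal $\mathbb{E}[\nabla_a\eta_{\mathrm{same}}\nabla_a\eta_{\mathrm{same}}^{\top}]=\tau_{\mathrm{same}}^{2}I_d$. This matches the stated moment condition for all $m,m'$.

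Then substitute the decomposition Eq.~(\ref{eq:m_task_components}) into $W_{\mathrm{AV}}=\mathbb{E}[\nabla\eta\nabla\eta^{\top}]$ and expand:
\begin{equation*}
W_{\mathrm{AV}}=(1-\pi)W_{\mathrm{ind}}+\pi W_{\mathrm{same}}+\sqrt{\pi(1-\pi)}\left(\mathbb{E}[\nabla\eta_{\mathrm{ind}}\nabla\eta_{\mathrm{same}}^{\top}]+\mathbb{E}[\nabla\eta_{\mathrm{same}}\nabla\eta_{\mathrm{ind}}^{\top}]\right).
\end{equation*}
The cross terms vanish by the independence clause of Assumption~\ref{as:generality}; the second is the transpose of the first. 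Since all second moments are finite, each expectation exists. The zero first moment of the joint gradient also follows, by linearity, from the zero means of the components.

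Collecting blocks, the diagonal blocks are $((1-\pi)\tau_{\mathrm{ind}}^{2}+\pi\tau_{\mathrm{same}}^{2})I_d=\tau_{\mathrm{AV}}^{2}I_d$ and the off-diagonal blocks are $\pi\tau_{\mathrm{same}}^{2}I_d=\tau_{\mathrm{AV}}^{2}\omega I_d$. This defines $\omega$ as stated.

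It remains to check the ranges. $\tau_{\mathrm{AV}}^{2}>0$ because $1-\pi>0$ and $\tau_{\mathrm{ind}}^{2}>0$. $\omega\ge0$ is immediate. $\omega<1$ holds because $\tau_{\mathrm{AV}}^{2}-\pi\tau_{\mathrm{same}}^{2}=(1-\pi)\tau_{\mathrm{ind}}^{2}>0$.

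As a consistency remark, $W_{\mathrm{AV}}$ has eigenvalues $\tau_{\mathrm{AV}}^{2}(1\pm\omega)>0$, so $W_{\mathrm{AV}}\succ0$, as Eq.~(\ref{eq:m_task_moments}) requires.

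There is no real obstacle. The only point needing care is that the moments are stated in the aligned coordinates; this justifies applying Eq.~(\ref{eq:m_aligned}) inside the expectation for $W_{\mathrm{same}}$.
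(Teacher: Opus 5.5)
Your proposal is correct and follows the paper's proof essentially step by step. It evaluates $W_{\mathrm{ind}}$ and $W_{\mathrm{same}}$ block-wise from Assumption~\ref{as:generality} and Eq.~(\ref{eq:m_aligned}), expands Eq.~(\ref{eq:m_task_components}) with the cross terms removed by independence, and confirms $W_{\mathrm{AV}}\succ0$ through the eigenvalues $\tau_{\mathrm{AV}}^{2}(1\pm\omega)$; your explicit checks that $\tau_{\mathrm{AV}}^{2}>0$ and $0\le\omega<1$ simply spell out what the paper states in one line.
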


{The second-moment {expression is derived} in Appendix~\ref{app:proof_moment}.}
{At $\omega=0$, $W_{\mathrm{AV}}=\tau_{\mathrm{AV}}^{2}I_{2d}$ recovers LeJEPA's {isotropic second-moment assumption over the full embedding space} in Section~\ref{sec:method_prelim}. {This isotropic prior ignores cross-modal coupling, whereas $\omega$ controls the contribution of shared-content components to the gradient second moment.} }

{\paragraph{Summary.} {This section derived the task prior for multimodal tasks and its second moment $W_{\mathrm{AV}}$. As in LeJEPA, the next section minimizes the Fisher information functional $J_{W_{\mathrm{AV}}}$ induced by this prior under the total-variance constraint to obtain the optimal embedding distribution.}\par}

\subsection{The optimal joint target distribution}
\label{sec:method_target}
{With $W_{\mathrm{AV}}$ in hand, this section solves for the optimal distribution under the multimodal prior: among all embedding distributions with a fixed total variance, find the one that minimizes $J_{W_{\mathrm{AV}}}$, the quantity that controls the probes' bias.}

\paragraph{{Optimizing} the control quantity $J_{W_{\mathrm{AV}}}$.} With $W_{\mathrm{AV}}$ from Proposition~\ref{prop:constant_spectrum}, the weighted Fisher information functional of Eq.~(\ref{eq:m_jw}) reads $J_{W_{\mathrm{AV}}}(p_z)=\operatorname{tr}(W_{\mathrm{AV}}\mathcal{I}(p_z))${; we minimize it under a fixed total-variance constraint to obtain the optimal shape of the embedding distribution}.

\begin{proposition}
\label{prop:weighted_cr}
{Let $p_z$ be an everywhere-positive density on $\mathbb{R}^{2d}$ with zero mean and covariance $\Sigma\succ0$. Assume $p_z\in C^{3}$ with bounded derivatives up to order three, and assume finite Fisher information and covariance.} Then
\begin{equation}
 J_{W_{\mathrm{AV}}}(p_z)\ \ge\ \operatorname{tr}\!\left(W_{\mathrm{AV}}\Sigma^{-1}\right),
 \label{eq:m_weighted_cr}
\end{equation}
and{,} when $W_{\mathrm{AV}}\succ0${,} equality holds if and only if $p_z=\mathcal{N}(0,\Sigma)$. The proof is given in Appendix~\ref{app:proof_cr}.
\end{proposition}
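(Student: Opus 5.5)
The plan is to prove a matrix Cramér--Rao inequality $\mathcal{I}(p_z)\succeq\Sigma^{-1}$ in the Loewner order. The weighted bound then follows by taking a trace against $W_{\mathrm{AV}}\succeq0$. Write $\mathbf{s}(\mathbf{z}):=\nabla\log p_z(\mathbf{z})$ for the score. The first step is the identity $\mathbb{E}_{p_z}[\mathbf{s}(\mathbf{z})\mathbf{z}^{\top}]=-I_{2d}$. Componentwise this reads $\int \partial_i p_z(\mathbf{z})\,z_j\,\mathrm{d}\mathbf{z}=-\delta_{ij}$, which comes from integration by parts. The boundary terms must vanish, and this is where the regularity hypotheses enter. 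Finite covariance and finite Fisher information give $\mathbb{E}|s_i z_j|<\infty$ by Cauchy--Schwarz. Together with the bounded derivatives of $p_z$, this justifies discarding the boundary terms. I would do this by integrating over balls of radius $r$ and letting $r\to\infty$ along a sequence on which the surface integrals of $|z_j|\,p_z$ tend to zero, which such a sequence exists because $\mathbb{E}|z_j|<\infty$. The same argument gives $\mathbb{E}[\mathbf{s}]=0$.

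With the identity in hand, consider the residual $\mathbf{r}:=\mathbf{s}+\Sigma^{-1}\mathbf{z}$. Its second moment expands as
\begin{equation*}
0\preceq\mathbb{E}[\mathbf{r}\mathbf{r}^{\top}]=\mathcal{I}(p_z)+\mathbb{E}[\mathbf{s}\mathbf{z}^{\top}]\Sigma^{-1}+\Sigma^{-1}\mathbb{E}[\mathbf{z}\mathbf{s}^{\top}]+\Sigma^{-1}\Sigma\Sigma^{-1}=\mathcal{I}(p_z)-\Sigma^{-1}.
\end{equation*}
Since $W_{\mathrm{AV}}\succeq0$, the trace of $W_{\mathrm{AV}}$ against a PSD matrix is nonnegative. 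Using Eq.~(\ref{eq:m_jw}), this gives
\begin{equation*}
J_{W_{\mathrm{AV}}}(p_z)-\operatorname{tr}(W_{\mathrm{AV}}\Sigma^{-1})=\operatorname{tr}\bigl(W_{\mathrm{AV}}\,\mathbb{E}[\mathbf{r}\mathbf{r}^{\top}]\bigr)=\mathbb{E}\bigl[\mathbf{r}^{\top}W_{\mathrm{AV}}\mathbf{r}\bigr]\ge0.
\end{equation*}
Proposition~\ref{prop:constant_spectrum} shows $W_{\mathrm{AV}}\succeq0$: its eigenvalues are $\tau_{\mathrm{AV}}^2(1\pm\omega)$, and $\omega\in[0,1)$ makes it in fact positive definite.

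For the equality case, assume $W_{\mathrm{AV}}\succ0$. Then $\mathbb{E}[\mathbf{r}^{\top}W_{\mathrm{AV}}\mathbf{r}]=0$ forces $\mathbf{r}=0$ $p_z$-almost everywhere. Because $p_z>0$ everywhere and $\mathbf{r}$ is continuous, this means $\nabla\log p_z(\mathbf{z})=-\Sigma^{-1}\mathbf{z}$ for every $\mathbf{z}$. Integrating along rays from the origin gives $\log p_z(\mathbf{z})=c-\tfrac12\mathbf{z}^{\top}\Sigma^{-1}\mathbf{z}$. Normalization then pins down $p_z=\mathcal{N}(0,\Sigma)$. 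Conversely, for the Gaussian the residual $\mathbf{r}$ vanishes identically, so equality holds.

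The main obstacle is the rigorous vanishing of the boundary terms in the integration by parts. It requires combining finite second moments, finite Fisher information, and bounded derivatives, rather than assuming decay of $z_j p_z$ at infinity outright. If the ball-truncation argument becomes delicate, I would instead use a smooth cutoff $\chi(\mathbf{z}/r)$ and dominated convergence. The key domination is $|\chi\,\partial_i p_z\, z_j|\le|s_i z_j|p_z$, which is integrable, and the error term from the cutoff's gradient is $O(\mathbb{E}[|z_j|\mathbf{1}_{|\mathbf{z}|>r}])\to0$.
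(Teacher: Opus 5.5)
Your proposal is correct and follows the paper's route: establish the matrix Cram\'er--Rao bound $\mathcal{I}(p_z)\succeq\Sigma^{-1}$, take the trace against $W_{\mathrm{AV}}$, and use $W_{\mathrm{AV}}\succ0$ to reduce the equality case to $\mathcal{I}(p_z)=\Sigma^{-1}$, i.e.\ to a Gaussian. The only difference is that the paper cites LeJEPA's Lemma~6 (Cram\'er--Rao for the shift family) for the bound and its equality condition, whereas you prove both directly from the identity $\mathbb{E}[\mathbf{s}\mathbf{z}^{\top}]=-I_{2d}$ and the residual $\mathbf{r}=\mathbf{s}+\Sigma^{-1}\mathbf{z}$, which makes your argument self-contained.
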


As in LeJEPA, we impose {the total-variance constraint $\operatorname{tr}(\Sigma)=2d$, which only fixes the overall scale; the optimal covariance then takes the form below, with unit variance per coordinate}.

\begin{proposition}
\label{prop:target}
{Let $W_{\mathrm{AV}}$ have the form in Eq.~(\ref{eq:m_wav}), with $\omega\in[0,1)$.}
Under the total-variance constraint above, the unique minimizer of $\operatorname{tr}(W_{\mathrm{AV}}\Sigma^{-1})$ over $\Sigma\succ0$ is
\begin{equation}
 \Sigma^{*}(\rho)=\begin{bmatrix}I_d&\rho I_d\\ \rho I_d&I_d\end{bmatrix},
 \qquad
 \rho=\frac{\omega}{1+\sqrt{1-\omega^{2}}}.
 \label{eq:m_target}
\end{equation}

\end{proposition}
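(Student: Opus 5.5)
Minimizing $\operatorname{tr}(W\Sigma^{-1})$ subject to $\operatorname{tr}(\Sigma)=c$ is a standard convex problem with a closed-form solution, and I would solve it first in general. The map $\Sigma\mapsto\operatorname{tr}(W\Sigma^{-1})$ is strictly convex on the open cone $\Sigma\succ0$ when $W\succ0$. This holds because $\Sigma\mapsto\Sigma^{-1}$ is operator convex, and the strictness follows from $W\succ0$. The constraint set $\{\Sigma\succ0:\operatorname{tr}\Sigma=2d\}$ is convex, so any stationary point is the unique global minimizer. I would also check that the objective blows up as $\Sigma$ approaches the boundary of the cone. This ensures a minimizer exists in the interior.

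The Lagrangian stationarity condition is $-\Sigma^{-1}W\Sigma^{-1}+\lambda I=0$, that is, $\Sigma W^{-1}\Sigma=\lambda^{-1}I$. Its positive-definite solution is $\Sigma\propto W^{1/2}$, the unique PD square root. The trace constraint then gives
\begin{equation*}
 \Sigma^{*}=\frac{2d}{\operatorname{tr}(W^{1/2})}\,W^{1/2}.
\end{equation*}
As a cross-check, Cauchy--Schwarz gives $\operatorname{tr}(W\Sigma^{-1})\operatorname{tr}(\Sigma)\ge(\operatorname{tr}W^{1/2})^2$. Equality holds iff $\Sigma^{1/2}\propto\Sigma^{-1/2}W^{1/2}$, which is again $\Sigma\propto W^{1/2}$. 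This argument gives both optimality and uniqueness directly, so I would present it as the main proof and keep the convexity argument as a remark.

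The remaining step is to compute $W_{\mathrm{AV}}^{1/2}$ for the matrix in Eq.~(\ref{eq:m_wav}). The matrix $\begin{bmatrix}I&\omega I\\ \omega I&I\end{bmatrix}$ has eigenvalues $1\pm\omega$, both positive since $\omega\in[0,1)$. The corresponding eigenspaces are spanned by $[u;\pm u]$. Its square root therefore has the same block form $\begin{bmatrix}\alpha I&\beta I\\ \beta I&\alpha I\end{bmatrix}$, with
\begin{equation*}
 \alpha=\tfrac12\left(\sqrt{1+\omega}+\sqrt{1-\omega}\right),\qquad \beta=\tfrac12\left(\sqrt{1+\omega}-\sqrt{1-\omega}\right).
\end{equation*}
The scalar $\tau_{\mathrm{AV}}$ drops out under normalization. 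Normalizing to trace $2d$ divides by $\alpha$, which gives unit diagonal blocks and off-diagonal coefficient $\rho=\beta/\alpha$.

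The last algebraic step is to show $\beta/\alpha=\omega/(1+\sqrt{1-\omega^2})$. I would multiply the numerator and denominator of $\beta/\alpha$ by $\sqrt{1+\omega}+\sqrt{1-\omega}$. The numerator becomes $2\omega$ and the denominator becomes $(\sqrt{1+\omega}+\sqrt{1-\omega})^2=2+2\sqrt{1-\omega^2}$, which gives the stated form of $\rho$. I expect no real obstacle. The only delicate point is uniqueness: it relies on choosing the PD square root and on the Cauchy--Schwarz equality case, and $W_{\mathrm{AV}}\succ0$ guarantees both are well defined.
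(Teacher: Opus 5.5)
Your proposal is correct and follows the paper's proof essentially step for step. Both use Cauchy--Schwarz in the Frobenius inner product to force $\Sigma\propto W_{\mathrm{AV}}^{1/2}$, then diagonalize $W_{\mathrm{AV}}$ in the $[u;\pm u]$ eigenbasis, normalize by the trace so that $\tau_{\mathrm{AV}}$ cancels, and rationalize $\beta/\alpha$ to obtain $\rho=\omega/(1+\sqrt{1-\omega^{2}})$. The Lagrangian and convexity argument you keep as a remark is extra and not needed.
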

{The covariance and correlation formulas are derived in Appendix~\ref{app:proof_target}.}

\paragraph{The optimal distribution shape for multimodal tasks.} Propositions~\ref{prop:weighted_cr} and~\ref{prop:target} show that, under the total-variance constraint, $\mathcal{N}(0,\Sigma^{*}(\rho))$ uniquely minimizes $J_{W_{\mathrm{AV}}}(p_z)$; we adopt {it} as our target embedding distribution. The task-prior parameters $\pi$, $\tau_{\mathrm{ind}}^{2}$ and $\tau_{\mathrm{same}}^{2}$ are {unknown before pretraining}, so {we retain this correlated Gaussian family} and treat $\rho$ as an empirical hyperparameter.

\subsection{Enforcing the target distribution with SIGReg}
\label{sec:method_sigreg}

\paragraph{Constraining the transformed {embedding} with SIGReg.} {SIGReg regularizes embeddings toward a standard Gaussian~\citep{lejepa}. {To adapt it to our correlated target, we first apply a fixed linear whitening transform that maps the target covariance to the identity~\citep{kessy2018whitening}. {We then apply} SIGReg to the transformed embeddings.} For the joint projection $\mathbf{z}=[\mathbf{e}_a;\mathbf{e}_v]$ and $|\rho|<1$, the transform is}
\begin{equation}
 T_{\rho}\mathbf{z}
 =\left[\frac{\mathbf{e}_a+\mathbf{e}_v}{\sqrt{2(1+\rho)}}\ ;\ \frac{\mathbf{e}_a-\mathbf{e}_v}{\sqrt{2(1-\rho)}}\right].
\end{equation}
Since $T_{\rho}$ is invertible, $T_{\rho}\mathbf{z}\sim\mathcal{N}(0,I_{2d})$ if and only if $\mathbf{z}\sim\mathcal{N}(0,\Sigma^{*}(\rho))$, so constraining the transformed embedding to a standard Gaussian enforces Eq.~(\ref{eq:m_target}).

{Our objective follows the prediction and regularization structure of LeJEPA. Following its view-counting convention, {each audio-visual sample produces one global embedding $\mathbf g$ and $V_l$ local embeddings $\mathbf{z}_1,\dots,\mathbf{z}_{V_l}$, one per audio window or video frame, and every local embedding predicts the global embedding.} The prediction term is the coordinate-wise mean squared error:}

{\begin{equation}
 \mathcal{L}_{\mathrm{pred}}
 =\frac{1}{dV_l}\sum_{i=1}^{V_l}\left\|\mathbf{z}_i-\mathbf{g}\right\|^{2}.
\end{equation}}
{The distributional term $\mathcal{L}_{\mathrm{SIGReg}}$ applies SIGReg with the target derived above to the global and to the local embeddings.} The overall objective is
\begin{equation}
 \mathcal{L}=(1-\lambda)\mathcal{L}_{\mathrm{pred}}+\lambda\mathcal{L}_{\mathrm{SIGReg}},
 \qquad \lambda\in[0,1].
\end{equation}

\begin{table}[t]
 \centering
    \vspace{-4em}
 \footnotesize
 \caption{{Frozen-backbone probing: VGGSound top-1 accuracy and AudioSet-20K mAP ({\%, }$\uparrow$). Attentive probing trains an {attention-pooling} head on the frozen encoder; {linear probing trains a single linear layer on {mean-pooled frozen features}}.}}
 \label{tab:probe}
 \setlength{\tabcolsep}{2.5pt}
 \begin{tabular}{llcccccccccccc}
 \toprule
 & & \multicolumn{6}{c}{Attentive probing} & \multicolumn{6}{c}{Linear probing} \\
 \cmidrule(lr){3-8}\cmidrule(lr){9-14}
 & & \multicolumn{3}{c}{VGGSound} & \multicolumn{3}{c}{AudioSet-20K} & \multicolumn{3}{c}{VGGSound} & \multicolumn{3}{c}{AudioSet-20K} \\
 \cmidrule(lr){3-5}\cmidrule(lr){6-8}\cmidrule(lr){9-11}\cmidrule(lr){12-14}
 Method & Backbone & A & V & A-V & A & V & A-V & A & V & A-V & A & V & A-V \\
 \midrule
 CAV-MAE & ViT-B & 40.7 & 30.2 & 46.1 & 33.3 & 18.8 & 36.6 & 39.2 & 30.8 & 43.8 & 32.7 & 18.1 & 35.1 \\
 EquiAV & ViT-B & 41.2 & 31.6 & 46.3 & 33.5 & 19.9 & 36.6 & 41.3 & 31.7 & 46.0 & 32.2 & 18.7 & 34.9 \\
 AVSiam & ViT-B & 36.9 & 32.7 & 43.8 & 29.6 & 21.7 & 34.0 & 37.6 & \textbf{34.1} & 42.9 & 30.7 & \textbf{22.6} & 34.1 \\
 CAV-MAE Sync & ViT-B & 35.0 & 23.0 & 40.3 & 26.6 & 12.2 & 29.4 & 35.2 & 23.7 & 40.3 & 26.1 & 11.9 & 28.1 \\
 \midrule
 MM-LeJEPA & ViT-B & \textbf{43.4} & \textbf{33.8} & \textbf{48.6} & \textbf{35.8} & \textbf{22.2} & \textbf{37.6} & \textbf{42.3} & 32.2 & \textbf{46.9} & \textbf{33.4} & 19.7 & \textbf{36.3} \\
 \bottomrule
 \vspace{-2em}
 \end{tabular}
\end{table}

\begin{table}[t]
 \centering
 \vspace{-1em}
 \footnotesize
 \caption{Fine-tuning on VGGSound (top-1 accuracy) and AudioSet (mAP) (\%, $\uparrow$). $\ast$: A and V {from} single-modality inference of the audio-visual model. {$\ddagger$: {supervised ImageNet-21K initialization}.}}
 \label{tab:ft}
 \setlength{\tabcolsep}{4pt}
 \begin{tabular}{llccccccccc}
 \toprule
 & & \multicolumn{3}{c}{VGGSound} & \multicolumn{3}{c}{AudioSet-20K} & \multicolumn{3}{c}{AudioSet-2M} \\
 \cmidrule(lr){3-5}\cmidrule(lr){6-8}\cmidrule(lr){9-11}
 Method & Backbone & A & V & A-V & A & V & A-V & A & V & A-V \\
 \midrule

 MBT$^{\ddagger}$ & ViT-B & 52.3 & 51.2 & 64.1 & 31.3 & 27.7 & 43.9 & 41.5 & 31.3 & 49.6 \\
 AV-MAE & ViT-B & --- & --- & 64.2 & --- & --- & --- & --- & --- & 50.0 \\
 CAV-MAE & ViT-B & 59.5 & 47.0 & 65.5 & 37.7 & 19.8 & 42.0 & 46.6 & 26.2 & 51.2 \\
 MAViL & ViT-B & 60.8 & 50.9 & 67.1 & 41.8 & 24.8 & 44.9 & 48.7 & 30.3 & 53.3 \\
 EquiAV & ViT-B & 61.0 & 50.7 & 67.1 & 42.4 & 25.7 & 46.6 & 49.1 & 30.1 & 54.6 \\
 AVSiam$^{\ast\ddagger}$ & ViT-B & 55.7 & 46.0 & 64.9 & 36.5 & 23.7 & 41.6 & 45.2 & 30.8 & 50.1 \\
 AV-JEPA$^{\ast}$ & ViT-B & --- & --- & 57.1 & 23.7 & 10.3 & 29.6 & 26.0 & 12.8 & 32.7 \\
 \midrule
 MM-LeJEPA & ViT-B & 60.5 & 49.5 & 66.9 & 38.9 & 21.7 & 43.1 & 46.2 & 27.3 & 51.1 \\
 \bottomrule
 \vspace{-2.5em}
 \end{tabular}
\end{table}

\section{Experiments}

\subsection{Experimental setup}

We pretrain on {AudioSet} without using {its} class labels~\citep{audioset}. {Because some videos are no longer available, our copies are smaller than the official lists; Appendix~\ref{app:data} gives the exact counts.}  We evaluate on VGGSound{ (VGG)}, a 309-class single-label {classification} benchmark~\citep{vggsound}, and on the balanced and full training subsets of AudioSet (527 labels), denoted AudioSet-20K{ (AS-20K)} and AudioSet-2M. We report top-1 accuracy for VGGSound and mean average precision (mAP) for AudioSet, in percent; A, V and A-V denote audio-only, video-only and audio-visual inputs.

Our encoder follows the ViT-B/16 configuration, with modality-specific patch embeddings and eleven modality-specific transformer blocks followed by a shared transformer block~\citep{vit,cavmae}. Audio is represented by a 128-bin log-mel spectrogram with 1,024 time positions, and video frames are resized to $224\times224$. A shared projection head maps the 768-dimensional backbone features to 128 dimensions. The reference {configuration} uses one jointly encoded global {audio-visual} view, seven local audio windows, and nine video-frame views sampled from a sixteen-frame pool. The encoder is initialized from ImageNet-MAE weights~\citep{mae}.
{Pretraining runs for 30 epochs with AdamW~\citep{adamw} at learning rate $2\times10^{-4}$ and batch size 256, with $\lambda=0.05$ and target correlation {$\rho=0.8$}.}

\subsection{Frozen-backbone probing}

Attentive probing trains an attention-pooling head~\citep{cae,vjepa} on the frozen encoder for 20 epochs, and linear probing trains a single linear layer on mean-pooled frozen features. Baselines {use} their released weights under the same code. Table~\ref{tab:probe} compares {the} frozen features. {With attentive probing, MM-LeJEPA {is best} in all six cells, {ahead of the strongest baseline in each column by 1.1--2.3 points on VGGSound and by 0.5--2.3 points on AudioSet-20K}. With linear probing it is best in four of the six cells, including both audio-visual cells (46.9 vs.\ 46.0 and 36.3 vs.\ 35.1), while AVSiam, which shares one transformer across modalities, has the strongest video-only features.} This is the setting the distributional target is designed for: the encoder is not adapted, so the scores reflect the embedding distribution itself.

\subsection{Fine-tuning on three datasets}
{Fine-tuning attaches a classification head and uses AdamW~\citep{adamw} (learning rate $10^{-4}$) for 10 epochs on VGGSound, Adam~\citep{adam} (learning rate $1.5\times10^{-4}$) for 15 on AudioSet-20K and Adam (learning rate $10^{-5}$) for 15 on AudioSet-2M, with cosine schedules. Table~\ref{tab:ft} reports {the} fine-tuning {results}. MM-LeJEPA performs on par with the best open-source methods on all three datasets, without contrastive negatives, EMA teacher or self-training.}

{
\paragraph{VGGSounder.} VGGSounder~\citep{zverevwiedemer2025vggsounder} re-annotates the VGGSound test set with multi-label, modality-specific labels. Following its protocol, we evaluate the VGGSound fine-tuned models on the background-music-free subset with the v1.0 labels, take the top-1 prediction, and report subset accuracy, F1 and hit rate separately for audio, visual and audio-visual inputs (Table~\ref{tab:vggsounder}{; Appendix~\ref{app:vggsounder_full} reports the full test set}). {Baseline results are taken from the benchmark paper~\citep{zverevwiedemer2025vggsounder}{ (arXiv v5, v1.0 labels)}.} MM-LeJEPA obtains the highest score in all nine columns.
}

\begin{table}[t]
 \centering
\vspace{-4em}
 \caption{VGGSounder results on the background-music-free subset (v1.0 labels, top-1, \%, $\uparrow$). A, V and A-V denote audio, visual and audio-visual inputs.}
 \label{tab:vggsounder}
 \footnotesize
 \setlength{\tabcolsep}{3.5pt}
 \begin{tabular}{lccccccccc}
 \toprule
 & \multicolumn{3}{c}{Subset accuracy} & \multicolumn{3}{c}{F1} & \multicolumn{3}{c}{Hit rate} \\
 \cmidrule(lr){2-4}\cmidrule(lr){5-7}\cmidrule(lr){8-10}
 Method & A & V & A-V & A & V & A-V & A & V & A-V \\
 \midrule
 CAV-MAE~\citep{cavmae} & 17.9 & 24.2 & 29.6 & 39.5 & 38.9 & 47.2 & 67.1 & 56.4 & 67.5 \\
 EquiAV~\citep{equiav} & 15.8 & 13.6 & 24.4 & 33.7 & 22.9 & 39.3 & 57.1 & 33.3 & 56.2 \\
 AVSiam~\citep{avsiam} & 17.4 & 25.0 & 27.9 & 38.1 & 39.8 & 43.7 & 64.7 & 57.7 & 62.5 \\
 DeepAVFusion~\citep{mo2024deepavfusion} & 13.9 & 13.8 & 26.2 & 29.1 & 23.3 & 41.8 & 49.4 & 33.8 & 59.8 \\
 \midrule
 MM-LeJEPA & \textbf{18.2} & \textbf{25.8} & \textbf{30.3} & \textbf{39.6} & \textbf{41.0} & \textbf{48.0} & \textbf{67.2} & \textbf{59.5} & \textbf{68.6} \\
 \bottomrule
 \end{tabular}
\end{table}

\begin{table}[t]
\vspace{-1.5em}
 \footnotesize
 \begin{minipage}[t]{0.48\linewidth}
 \centering\setlength{\tabcolsep}{3.5pt}
 \caption{Regularization-weight study.}
 \label{tab:lambda}
 \begin{tabular}{rrrrr}
 \toprule
 & \multicolumn{2}{c}{Fine-tuning} & \multicolumn{2}{c}{Attentive probing} \\
 \cmidrule(lr){2-3}\cmidrule(lr){4-5}
 $\lambda$ & VGG & AS-20K & VGG & AS-20K \\
 \midrule
 0.025 & 66.5 & 42.8 & 48.4 & 37.4 \\
 0.050 & 66.9 & 43.1 & 48.6 & 37.6 \\
 0.100 & 66.3 & 42.4 & 48.8 & 37.7 \\
 0.200 & 65.2 & 40.5 & 46.6 & 37.1 \\

 \bottomrule
 \end{tabular}
 \end{minipage}\hfill
 \begin{minipage}[t]{0.48\linewidth}
 \centering\setlength{\tabcolsep}{3.5pt}
 \caption{Batch-size study. }
 \label{tab:batch}
 \begin{tabular}{rrrrr}
 \toprule
 & \multicolumn{2}{c}{Fine-tuning} & \multicolumn{2}{c}{Attentive probing} \\
 \cmidrule(lr){2-3}\cmidrule(lr){4-5}
 Batch & VGG & AS-20K & VGG & AS-20K \\
 \midrule
 64 & 65.9 & 41.1 & 48.4 & 37.3 \\
 128 & 66.7 & 42.9 & 48.8 & 37.6 \\
 256 & 66.9 & 43.1 & 48.6 & 37.6 \\
 512 & 66.6 & 42.8 & 48.3 & 37.1 \\
 \bottomrule
 \end{tabular}
 \end{minipage}
\end{table}

\begin{table}[t]
\vspace{-1.5em}
 \footnotesize
 \begin{minipage}[t]{0.365\linewidth}
 \centering\setlength{\tabcolsep}{2pt}
 \caption{Target-correlation study.}

 \label{tab:rho}
 \begin{tabular}{rrrrr}
 \toprule
 & \multicolumn{2}{c}{Fine-tuning} & \multicolumn{2}{c}{Attentive probing} \\
 \cmidrule(lr){2-3}\cmidrule(lr){4-5}
 $\rho$ & VGG & AS-20K & VGG & AS-20K \\
 \midrule
 0.00 & 64.6 & 31.5 & 39.8 & 27.8 \\
 0.20 & 66.2 & 42.9 & 48.1 & 37.0 \\
 0.40 & 66.5 & 43.0 & 48.2 &37.1 \\
 0.60 & 66.7 & 42.9 & 48.5 & 37.4 \\
 0.80 & 66.9 & 43.1 & 48.6 & 37.6 \\
 0.99 & 64.8 & 39.0 & 45.1 & 34.4 \\
 \bottomrule
 \end{tabular}
 \end{minipage}\hfill
 \begin{minipage}[t]{0.59\linewidth}
 \centering\setlength{\tabcolsep}{2pt}
 \caption{Component ablations. }
 \label{tab:ablation}
 \begin{tabular}{lrrrr}
 \toprule
 & \multicolumn{2}{c}{Fine-tuning} & \multicolumn{2}{c}{Attentive probing} \\
 \cmidrule(lr){2-3}\cmidrule(lr){4-5}
 Setting & VGG & AS-20K & VGG & AS-20K \\
 \midrule
 Full objective & 66.9 & 43.1 & 48.6 & 37.6 \\
 Isotropic joint target ($\rho=0$) & 64.6 & 31.5 & 39.8 & 27.8 \\
 Global only & 66.5 & 43.0 & 47.9 & 37.3 \\
 Local only & 66.4 & 43.1 & 47.6 & 37.0 \\
 Without prediction & 41.3 & 5.6 & 10.0 & 5.7 \\
 Without SIGReg & 40.2 & 2.0 & 13.2 & 1.4 \\
 \bottomrule
 \end{tabular}
 \end{minipage}
\end{table}

\subsection{{Sensitivity to target correlation, regularization weight {and} batch size}}

{Table~\ref{tab:rho} varies the target correlation $\rho$. Moving it away from the reference degrades every score: mildly for $\rho=0.2$ (by at most 0.7), clearly for $\rho=0.99$ (by 2.1--4.1 points), and sharply for LeJEPA's isotropic target $\rho=0$, whose attentive-probing scores fall by 8.8 and 9.8; Appendix~\ref{app:rho_vis} visualizes the effect of $\rho$ on the embeddings.} {Tables~\ref{tab:lambda} and~\ref{tab:batch} vary the regularization weight $\lambda$ and the batch size around the {reference configuration}. With $\lambda=0.1$ the scores stay within 0.7 of the {reference}, whereas $\lambda=0.2$ lowers {three of the four scores by 1.7--2.6 points}, so the prediction term must dominate the objective. {{Halving the batch to 128 changes the scores by at most 0.2 with no consistent sign, and doubling it to 512 lowers them by at most 0.5, so the scores are insensitive to the number of samples per SIGReg estimate in this range}; only the smallest batch, 64, costs up to 2.0 on AudioSet-20K.}}

\subsection{Component ablations}
{Table~\ref{tab:ablation} varies where the regularizer is applied and removes each loss term: ``global only'' applies the SIGReg term to the global embeddings alone and ``local only'' to the local embeddings alone, whereas the full objective applies it to both.}
{Restricting the regularizer to either the global or the local embeddings changes the scores by at most one point.} Removing either prediction or SIGReg collapses the representation, with all probing scores below 14, so both terms are necessary.

\section{Conclusion}

We presented MM-LeJEPA, a multimodal extension of LeJEPA that models independent-information and shared-content task components through their gradient moments. Under the stated correspondence and symmetry assumptions, minimizing the weighted Fisher information functional with a fixed total variance yields a correlated Gaussian target. A fixed whitening transform allows this target to be enforced with SIGReg while retaining {an objective combining prediction and regularization}. {{This enables multimodal pretraining without reconstruction decoders, contrastive negatives, or an EMA teacher.}} {On public audio-visual {classification} benchmarks, MM-LeJEPA is competitive with open-source methods under both frozen probing and fine-tuning.}

\clearpage

\bibliography{avlejepa}
\bibliographystyle{iclr2027_conference}

\clearpage
\appendix
\section*{Appendix}
{The appendix contains the following:}
\begin{itemize}
 \item Sizes of the datasets used, in Appendix~\ref{app:data}.
 \item Visualization of the effect of the target correlation on the embeddings, in Appendix~\ref{app:rho_vis}.
 \item {VGGSounder results on the full test set, in Appendix~\ref{app:vggsounder_full}.}
 \item Proof that the weighted Fisher information functional of Eq.~(\ref{eq:m_jw}) is a trace in Appendix~\ref{app:proof_jw}.
 \item Derivation of the two probe relations of Eq.~(\ref{eq:m_probes}), {including their explicit constants}, in Appendix~\ref{app:proof_probes}.
 \item {Sources of the isotropic Fisher bound and its equality conditions in Appendix~\ref{app:proof_isotropic}.}
 \item Proof of Proposition~\ref{prop:constant_spectrum}, Eqs.~(\ref{eq:m_two_classes}) and~(\ref{eq:m_wav}), in Appendix~\ref{app:proof_moment}.
 \item Proof of Proposition~\ref{prop:weighted_cr}, Eq.~(\ref{eq:m_weighted_cr}), in Appendix~\ref{app:proof_cr}.
 \item Proof of Proposition~\ref{prop:target}, Eq.~(\ref{eq:m_target}), in Appendix~\ref{app:proof_target}.
\end{itemize}

\section{Dataset statistics}
\label{app:data}
Table~\ref{tab:data} lists the number of clips in each dataset as used in this paper; some videos are no longer available, so our copies are smaller than the official lists. The pretraining and AudioSet-2M fine-tuning sets are the same 1{,}700{,}719 clips, which include all 18{,}020 clips of our AudioSet-20K copy.
\begin{table}[h]
 \centering
 \caption{Number of clips in each dataset: official list versus the copy used in this paper.}
 \label{tab:data}
 \footnotesize
 \begin{tabular}{lcc}
 \toprule
 Dataset & Official list & Used here \\
 \midrule
 AudioSet training (balanced and unbalanced) & 2,063,949 & 1,700,719 \\
 AudioSet-20K (balanced training) & 22,160 & 18,020 \\
 AudioSet evaluation & 20,371 & 16,621 \\
 VGGSound training & 183,730 & 183,541 \\
 VGGSound test & 15,446 & 15,428 \\
 \bottomrule
 \end{tabular}
\end{table}

\section{Effect of the target correlation on the embeddings}
\label{app:rho_vis}
Figure~\ref{fig:rho_six} shows the embeddings of two pretraining runs that differ only in the target correlation $\rho$, on 1{,}500 VGGSound {test} clips; no class labels are used. Each modality on its own (left two columns) looks alike under both settings, so the target correlation does not change the per-modality distribution. In the joint map (third column), the audio and video embeddings of the same clips occupy separate strands for $\rho=0$ and share every small cluster for $\rho=0.8$; the fraction of cross-modal nearest neighbors rises from 24\% to 46\% (50\% for fully mixed modalities). The right column plots the paired coordinates $(e_{a,i},e_{v,i})$ over all clips and coordinates, which is the quantity the target covariance constrains: the density is circular for $\rho=0$ and elongated along the diagonal for $\rho=0.8$, with a mean per-coordinate correlation of 0.66.

\begin{figure}[h]
 \centering
 \includegraphics[width=\linewidth]{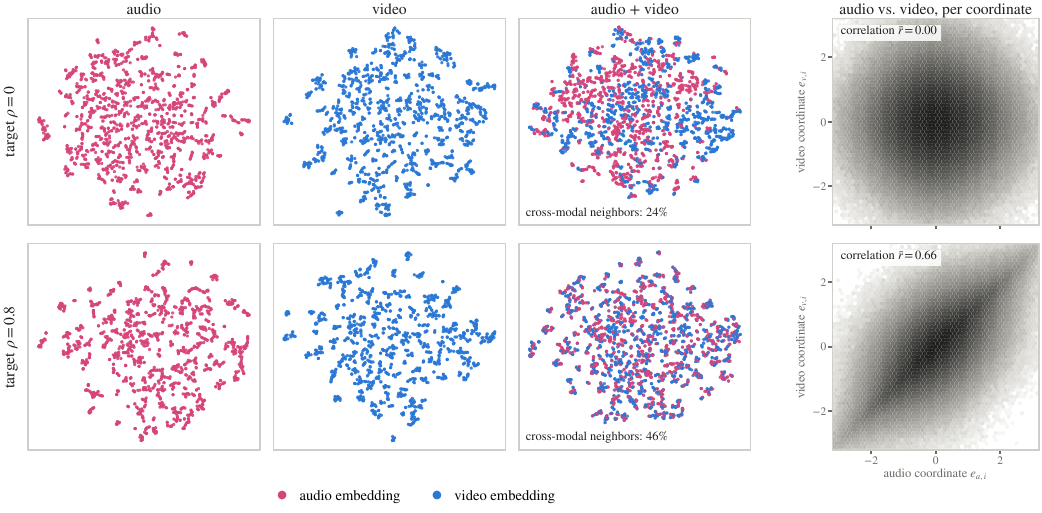}
 \caption{Embeddings of two pretraining runs that differ only in the target correlation $\rho$, on 1{,}500 VGGSound {test} clips (no class labels). Left three columns: t-SNE maps of the audio embeddings, of the video embeddings, and of both together (audio in magenta, video in blue); the number in the joint map is the fraction of each embedding's ten nearest neighbors, computed in the 128-dimensional embedding space, that belong to the other modality. Right column: density of the paired coordinates $(e_{a,i},e_{v,i})$ over all clips and coordinates, with the mean per-coordinate correlation $\bar r$.}
 \label{fig:rho_six}
\end{figure}

\section{VGGSounder on the full test set}
\label{app:vggsounder_full}
Table~\ref{tab:vggsounder} follows the benchmark protocol and evaluates on the background-music-free subset. Table~\ref{tab:vggsounder_full} repeats the evaluation on the full VGGSounder test set (15,051 clips) with the same v1.0 labels, metrics and top-1 prediction. Baseline scores are computed with the same scorer from the per-clip predictions released by the benchmark.

\begin{table}[h]
 \centering
 \caption{VGGSounder results on the full test set (v1.0 labels, top-1, \%, $\uparrow$). A, V and A-V denote audio, visual and audio-visual inputs, best per column in bold.}
 \label{tab:vggsounder_full}
 \footnotesize
 \setlength{\tabcolsep}{3.5pt}
 \begin{tabular}{lccccccccc}
 \toprule
 & \multicolumn{3}{c}{Subset accuracy} & \multicolumn{3}{c}{F1} & \multicolumn{3}{c}{Hit rate} \\
 \cmidrule(lr){2-4}\cmidrule(lr){5-7}\cmidrule(lr){8-10}
 Method & A & V & A-V & A & V & A-V & A & V & A-V \\
 \midrule
 CAV-MAE~\citep{cavmae} & 16.7 & 24.2 & 29.4 & {38.1} & 38.5 & 46.7 & {65.1} & 55.2 & 66.0 \\
 EquiAV~\citep{equiav} & 14.6 & 13.4 & 24.0 & 32.1 & 22.4 & 38.5 & 54.8 & 32.2 & 54.4 \\
 AVSiam~\citep{avsiam} & 16.1 & 24.7 & 27.2 & 36.6 & 39.0 & 42.7 & 62.6 & 55.9 & 60.4 \\
 DeepAVFusion~\citep{mo2024deepavfusion} & 12.8 & 13.6 & 25.8 & 27.5 & 22.7 & 40.9 & 47.1 & 32.5 & 57.8 \\
 \midrule
 MM-LeJEPA & \textbf{16.9} & \textbf{25.8} & \textbf{29.9} & \textbf{38.1} & \textbf{40.5} & \textbf{47.3} & \textbf{65.1} & \textbf{58.1} & \textbf{66.9} \\
 \bottomrule
 \end{tabular}
\end{table}

\section{Proofs}

\subsection{Trace representation of the weighted Fisher information (Eq.~\ref{eq:m_jw})}
\label{app:proof_jw}

The scalar identity $(\nabla\eta(\mathbf{q})^{\top}\nabla\log p_z(\mathbf{q}))^2=\nabla\log p_z(\mathbf{q})^{\top}\nabla\eta(\mathbf{q})\nabla\eta(\mathbf{q})^{\top}\nabla\log p_z(\mathbf{q})$ and $\mathbb{E}_{\eta}[\nabla\eta(\mathbf{q})\nabla\eta(\mathbf{q})^{\top}]=W$ give $\mathbb{E}_{\eta}[(\nabla\eta(\mathbf{q})^{\top}\nabla\log p_z(\mathbf{q}))^2]=\nabla\log p_z(\mathbf{q})^{\top}W\nabla\log p_z(\mathbf{q})$. Averaging over $\mathbf{q}\sim p_z$, we rewrite the quadratic form as a trace by the cyclicity of the trace and then move the expectation inside the trace, which is justified because the Fisher information is finite:
\begin{equation}
 \begin{aligned}
 J_W(p_z)
 &=\mathbb{E}_{\mathbf{q}}[\nabla\log p_z(\mathbf{q})^{\top}W\nabla\log p_z(\mathbf{q})]\\
 &=\mathbb{E}_{\mathbf{q}}\operatorname{tr}(W\nabla\log p_z(\mathbf{q})\nabla\log p_z(\mathbf{q})^{\top})
 =\operatorname{tr}\!\left(W\mathcal{I}(p_z)\right).
 \end{aligned}
\end{equation}

\subsection{Role of weighted Fisher information in the two probes (Eq.~\ref{eq:m_probes})}
\label{app:proof_probes}

{Both relations follow the corresponding proofs of \citet{lejepa} step by step (lemma and appendix numbers refer to arXiv v3). {The task definitions, estimators and regularity conditions below {are taken from LeJEPA and restated here} to make the appendix self-contained. Our task prior changes only the term involving the second moment of the task gradient, and we expand that term alone.}}  We write $K$ for the embedding dimension ($K=2d$ for the joint embedding) and $\mathcal{K}$ for the kernel.

{Conditional on $\eta$, {the training samples $\mathcal D$ are i.i.d.\ and independent of $\mathbf q$, and we} define $\operatorname{Bias}(\mathbf q)=\mathbb E_{\mathcal D}[\widehat y(\mathbf q)\mid\eta,\mathbf q]-\eta(\mathbf q)${ and $\mathrm{ISB}:=\mathbb E_{\eta,\mathbf q}[\operatorname{Bias}(\mathbf q)^{2}]$}. Set zero-denominator predictions to zero; the remainder assumptions include this convention.}

\paragraph{Radius-based $k$-NN probe: task and estimator.} Following \citet[Appendix~A.1]{lejepa}, the embedding density $p_z$ is $C^{3}$ with derivatives of order up to three bounded and with finite Fisher information and covariance. The unknown labels {are given by} a target function $\eta:\mathbb{R}^{K}\to\mathbb{R}$ assumed $C^{2}$; classification is handled by setting $\eta(\mathbf{z})=\mathbb{P}(Y=1\mid\mathbf{z})$. {The training data consist of} the $N$ embeddings together with their labels $\{(\mathbf{z}_n,\eta(\mathbf{z}_n))\}_{n=1}^{N}$, and we write $y_n:=\eta(\mathbf{z}_n)$. The prediction at a query $\mathbf{q}$ averages the labels of the training points inside a ball of radius $r_0$,
\begin{equation}
 \widehat{y}(\mathbf{q}):=\frac{1}{N_{r_0}(\mathbf{q})}\sum_{n\,:\,\|\mathbf{z}_n-\mathbf{q}\|\le r_0}y_n,
\end{equation}
\begin{equation}
 N_{r_0}(\mathbf{q}):=\#\left\{n:\|\mathbf{z}_n-\mathbf{q}\|\le r_0\right\},
\end{equation}
where the radius $r_0$ controls how many neighbors are averaged. Query points follow the training distribution, $\mathbf{q}\sim p_z$.

{Following Lemma~4 of \citet{lejepa}, we write the bias expansion as}
\begin{equation}
 \operatorname{Bias}(\mathbf{q})
 =\frac{r_0^{2}}{K+2}\left[\nabla\eta(\mathbf{q})^{\top}\nabla\log p_z(\mathbf{q})+\tfrac{1}{2}\Delta\eta(\mathbf{q})\right] {+\xi_{r_0}(\mathbf q).}
 \label{eq:a_knn_bias}
\end{equation}
{The remainder $\xi_{r_0}$ includes finite-sample corrections; we assume $\mathbb E_{\eta,\mathbf q}[\xi_{r_0}^{2}]=o(r_0^{4})$ in the joint large-sample, small-radius limit.} {Following their notation we abbreviate $A(\mathbf{q}):=\nabla\eta(\mathbf{q})^{\top}\nabla\log p_z(\mathbf{q})$ and $C(\mathbf{q}):=\tfrac12\Delta\eta(\mathbf{q})$.} Squaring Eq.~(\ref{eq:a_knn_bias}) and taking the expectation over $\mathbf{q}\sim p_z$ and over the task prior gives their three-term decomposition,
\begin{equation}
 \mathbb{E}_{\eta,\mathbf{q}}\!\left[\operatorname{Bias}(\mathbf{q})^{2}\right]
 =\left(\frac{r_0^{2}}{K+2}\right)^{2}
 \Big\{\underbrace{\mathbb{E}_{\eta,\mathbf{q}}\!\left[A^{2}\right]}_{\text{score--gradient}}
 +2\underbrace{\mathbb{E}_{\eta,\mathbf{q}}\!\left[AC\right]}_{\text{cross}}
 +\underbrace{\mathbb{E}_{\eta,\mathbf{q}}\!\left[C^{2}\right]}_{\text{curvature}}\Big\}+o(r_0^{4}).
 \label{eq:a_knn_three_terms}
\end{equation}

Only the score--gradient term involves the second moment of the task gradient. {Expanding this term} with the joint prior of Section~\ref{sec:method} in place of the isotropic one, and using the identity of Appendix~\ref{app:proof_jw},
\begin{equation}
 \begin{aligned}
 \mathbb{E}_{\eta,\mathbf{q}}\!\left[A^{2}\right]
 &=\mathbb{E}_{\mathbf{q}}\,\mathbb{E}_{\eta}\!\left[\left(\nabla\eta(\mathbf{q})^{\top}\nabla\log p_z(\mathbf{q})\right)^{2}\right]\\
 &=\mathbb{E}_{\mathbf{q}}\operatorname{tr}\!\left(\nabla\log p_z(\mathbf{q})\nabla\log p_z(\mathbf{q})^{\top}\,
 \mathbb{E}_{\eta}\!\left[\nabla\eta(\mathbf{q})\nabla\eta(\mathbf{q})^{\top}\right]\right)\\
 &=\operatorname{tr}\!\left(W\,\mathbb{E}_{\mathbf{q}}\!\left[\nabla\log p_z(\mathbf{q})\nabla\log p_z(\mathbf{q})^{\top}\right]\right)\\
 &=\operatorname{tr}\!\left(W\mathcal{I}(p_z)\right)\;=\;J_W(p_z).
 \end{aligned}
\end{equation}
{This modifies the score--gradient term.} Where \citet{lejepa} substitute $\mathbb{E}_{\eta}[\nabla\eta\nabla\eta^{\top}]=\tau_g^{2}I_K$ at this point and obtain $\tau_g^{2}J(p_z)$, the structured prior {retains the matrix $W$ and yields} $J_W(p_z)$; setting $W=\tau_g^{2}I_K$ {recovers their expression exactly}.

{For the cross and curvature terms} we additionally assume $\mathbb{E}_{\eta}[\nabla\eta(\mathbf{q})\,\Delta\eta(\mathbf{q})]=0$ and $\mathbb{E}_{\eta}[(\Delta\eta(\mathbf{q}))^{2}]=\kappa^{2}$, where $\kappa^{2}<\infty$ is independent of $\mathbf{q}$ and $p_z$. {We retain the cross and curvature decomposition of \citet[Appendix~B.4]{lejepa} under these additional moment assumptions.} The cross term vanishes under the assumption $\mathbb{E}_{\eta}[\nabla\eta\,\Delta\eta]=0$ stated above. The curvature term equals $\tfrac14\mathbb{E}_{\eta}[(\Delta\eta)^{2}]=\kappa^{2}/4$, which does not depend on $p_z$; multiplied by the prefactor it contributes at order $O(r_0^{4})$, the same order as the $J_W$ term, and is therefore a constant offset rather than a vanishing remainder. Substituting the three terms into Eq.~(\ref{eq:a_knn_three_terms}),
\begin{equation}
 \mathbb{E}_{\eta,\mathbf{q}}\!\left[\operatorname{Bias}(\mathbf{q})^{2}\right]
 =\frac{r_0^{4}}{(K+2)^{2}}\,J_W(p_z){+\frac{r_0^{4}\kappa^{2}}{4(K+2)^{2}}+o(r_0^{4})},
\end{equation}
which is the first relation of Eq.~(\ref{eq:m_probes}), with {$c_{k\mathrm{NN},0}=r_0^{4}\kappa^{2}/(4(K+2)^{2})$ from the curvature term, $c_{k\mathrm{NN},1}=r_0^{4}(K+2)^{-2}$, and $\varepsilon_{k\mathrm{NN}}=o(r_0^{4})$ from the remainder (the cross term vanishes)}. As in \citet{lejepa}, what this {relation} establishes is not that the integrated squared bias equals the functional, but that the part of it which varies with the shape of $p_z$ at leading order is $J_W(p_z)$.

\paragraph{Nadaraya--Watson kernel probe: task and estimator.} Following \citet[Appendix~A.2]{lejepa}, consider a {nonnegative} kernel $\mathcal{K}:\mathbb{R}^{K}\to\mathbb{R}$ with the standard properties
\begin{align}
 \int_{\mathbb{R}^{K}}\mathcal{K}(\mathbf{u})\,\mathrm{d}\mathbf{u}&=1, &&\text{(normalized)}\\
 \int_{\mathbb{R}^{K}}\mathbf{u}\,\mathcal{K}(\mathbf{u})\,\mathrm{d}\mathbf{u}&=0, &&{\text{(zero first moment)}}\\
 \int_{\mathbb{R}^{K}}\mathbf{u}\mathbf{u}^{\top}\mathcal{K}(\mathbf{u})\,\mathrm{d}\mathbf{u}&=\mu_2(\mathcal{K})I_K, &&{\text{(isotropic second moment)}}\\
 \int_{\mathbb{R}^{K}}\mathcal{K}(\mathbf{u})^{2}\,\mathrm{d}\mathbf{u}&<\infty, &&\text{(finite roughness)}
\end{align}
for some $\mu_2(\mathcal{K})\in(0,\infty)$. For a bandwidth $h>0$ write $\mathcal{K}_h(\mathbf{t}):=h^{-K}\mathcal{K}(\mathbf{t}/h)$. The Nadaraya--Watson estimator at a query $\mathbf{q}$ is
\begin{equation}
 \widehat{y}(\mathbf{q}):=\frac{\sum_{n=1}^{N}\mathcal{K}_h(\mathbf{q}-\mathbf{z}_n)\,y_n}
 {\sum_{n=1}^{N}\mathcal{K}_h(\mathbf{q}-\mathbf{z}_n)}.
\end{equation}

The targets are given by $\eta(\mathbf{z}_n)=\mathbb{E}[Y_n\mid\mathbf{z}_n]$, so the task function is {the same as that used for} the $k$-NN probe, and the conditional variance of the target is {denoted by} $\sigma^{2}(\mathbf{z}):=\operatorname{Var}(Y\mid\mathbf{z})$. The regularity conditions of the $k$-NN derivation are {retained}, and $p_z$ is additionally assumed to have light enough tails that $p_z(\mathbf{z})\to0$ and $z_j\,p_z(\mathbf{z})\to0$ as $\|\mathbf{z}\|\to\infty$ for every coordinate $j$. {The task prior is supported on the smoothness class}
\begin{equation}
 \mathcal{M}(L,B):=\left\{\eta\in C^{2}(\mathbb{R}^{K}):
 \left\|\nabla\eta(\mathbf{q})\right\|\le L,\ \left|\Delta\eta(\mathbf{q})\right|\le B,\ \forall\mathbf{q}\in\mathbb{R}^{K}\right\}.
 \label{eq:a_kernel_class}
\end{equation}
{Following Lemma~5 of \citet{lejepa}, for $h\to0$ and $Nh^K\to\infty$ we write}
\begin{equation}
 \operatorname{Bias}(\mathbf{q})
 =\frac{h^{2}\mu_2(\mathcal{K})}{2}\left[\Delta\eta(\mathbf{q})+2\nabla\eta(\mathbf{q})^{\top}\nabla\log p_z(\mathbf{q})\right]{+\xi_h(\mathbf q).}
 \label{eq:a_kernel_bias}
\end{equation}
{The remainder $\xi_h$ includes finite-sample corrections; we assume $\mathbb E_{\eta,\mathbf q}[\xi_h^{2}]=o(h^{4})$ in the joint large-sample, small-bandwidth limit.}

We now follow the steps of \citet[Appendix~B.7]{lejepa}. Squaring Eq.~(\ref{eq:a_kernel_bias}), {averaging over tasks and queries $\mathbf q\sim p_z$} and applying $(a+b)^{2}\le2a^{2}+2b^{2}$ pointwise,
\begin{equation}
 \begin{aligned}
 {\mathbb{E}_{\eta,\mathbf{q}}}\!\left[\operatorname{Bias}(\mathbf{q})^{2}\right]
 &=\left(\frac{h^{2}\mu_2(\mathcal{K})}{2}\right)^{2}
 {\mathbb E_{\eta}}\int\left[\Delta\eta+2\nabla\eta^{\top}\nabla\log p_z\right]^{2}p_z+o(h^{4})\\
 &\le\left(\frac{h^{2}\mu_2(\mathcal{K})}{2}\right)^{2}
 \left[2{\mathbb E_{\eta}}\int\left(\Delta\eta\right)^{2}p_z
 +8{\mathbb E_{\eta}}\int\left(\nabla\eta^{\top}\nabla\log p_z\right)^{2}p_z\right]+o(h^{4}).
 \end{aligned}
 \label{eq:a_kernel_split}
\end{equation}
The first integral is bounded exactly as in LeJEPA's proof: the curvature bound of Eq.~(\ref{eq:a_kernel_class}) gives $\int(\Delta\eta)^{2}p_z\le B^{2}$.

{We modify only the treatment of the second integral.} \citet{lejepa} bound it with the worst-case gradient norm of the class $\mathcal{M}(L,B)$, applying Cauchy--Schwarz and $\|\nabla\eta\|\le L$ to obtain $L^{2}J(p_z)$, which {does not retain} the directional structure of the task prior. We instead take the task expectation first, so that the task-moment identity in Appendix~\ref{app:proof_jw} applies pointwise and the integral is evaluated rather than bounded,
\begin{equation}
 \begin{aligned}
 \int\mathbb{E}_{\eta}\!\left[\left(\nabla\eta(\mathbf{q})^{\top}\nabla\log p_z(\mathbf{q})\right)^{2}\right]p_z(\mathbf{q})\,\mathrm{d}\mathbf{q}
 &=\int \nabla\log p_z(\mathbf{q})^{\top}W\nabla\log p_z(\mathbf{q})\,p_z(\mathbf{q})\,\mathrm{d}\mathbf{q}\\
 &=\operatorname{tr}\!\left(W\mathcal{I}(p_z)\right)\;=\;J_W(p_z).
 \end{aligned}
\end{equation}

Substituting both integrals into Eq.~(\ref{eq:a_kernel_split}),
\begin{equation}
 \mathbb{E}_{\eta,\mathbf{q}}\!\left[\operatorname{Bias}(\mathbf{q})^{2}\right]
 \le\left(\frac{h^{2}\mu_2(\mathcal{K})}{2}\right)^{2}\left[2B^{2}+8J_W(p_z)\right]+o(h^{4}),
\end{equation}
which is the second relation of Eq.~(\ref{eq:m_probes}), with
\begin{equation}
 c_{\mathrm{ker},0}=2B^{2}\left(\frac{h^{2}\mu_2(\mathcal{K})}{2}\right)^{2},
 \qquad
 c_{\mathrm{ker},1}=8\left(\frac{h^{2}\mu_2(\mathcal{K})}{2}\right)^{2},
 \qquad
 \varepsilon_{\mathrm{ker}}=o(h^{4}).
\end{equation}
These task-averaged bias relations motivate the use of $J_W(p_z)$ as the distribution-design criterion in Section~\ref{sec:method}.

\begingroup
\subsection{Sources of the isotropic Fisher bound and equality conditions (Eq.~\ref{eq:m_two_ineq})}
\label{app:proof_isotropic}
The unweighted Fisher bound and Gaussian equality condition used in Eq.~(\ref{eq:m_two_ineq}) are established in Lemma~6 of \citet[Appendix~B.4, p.~30]{lejepa}; the trace-constraint step and its equality condition are given in Step~2(a) of the same appendix. The resulting isotropic Gaussian optimum $p_z^{*}=\mathcal{N}(0,I_K)$ is the trace-constrained case of that appendix's concluding optimality statement.
\par
\endgroup

\subsection{Proof of Proposition~\ref{prop:constant_spectrum} (Eqs.~\ref{eq:m_two_classes} and~\ref{eq:m_wav})}
\label{app:proof_moment}

\begin{proof}[Proof of Proposition~\ref{prop:constant_spectrum}]
By the independent-information conditions of Assumption~\ref{as:generality}, $W_{\mathrm{ind}}$ has diagonal blocks $\tau_{\mathrm{ind}}^{2}I_d$ and zero cross blocks. For the shared-content component, Eq.~(\ref{eq:m_aligned}) gives identical gradients on the two sides, so all four blocks of $W_{\mathrm{same}}$ equal $\tau_{\mathrm{same}}^{2}I_d$. This proves Eq.~(\ref{eq:m_two_classes}). Component independence and zero means give $\mathbb{E}_{\eta}[\nabla\eta_{\mathrm{ind}}\nabla\eta_{\mathrm{same}}^{\top}]=0$, and expanding Eq.~(\ref{eq:m_task_components}) yields
\[
 \begin{aligned}
 \mathbb{E}_{\eta}[\nabla\eta\nabla\eta^{\top}]
 &=(1-\pi)W_{\mathrm{ind}}+\pi W_{\mathrm{same}}\\
 &\quad+\sqrt{\pi(1-\pi)}\,
 \mathbb{E}_{\eta}[\nabla\eta_{\mathrm{ind}}\nabla\eta_{\mathrm{same}}^{\top}
 +\nabla\eta_{\mathrm{same}}\nabla\eta_{\mathrm{ind}}^{\top}]\\
 &=(1-\pi)W_{\mathrm{ind}}+\pi W_{\mathrm{same}}.
 \end{aligned}
\]
{The weights $(1-\pi)$ and $\pi$ in Eq.~(\ref{eq:m_wav}) are thus the squared component amplitudes of Eq.~(\ref{eq:m_task_components}).} The combined task prior also satisfies $\mathbb{E}_{\eta}[\nabla\eta]=0$. Since $\tau_{\mathrm{ind}}^{2}>0$ and $\pi<1$, we have $1-\omega>0$, so the two eigenvalues $\tau_{\mathrm{AV}}^{2}(1+\omega)$ and $\tau_{\mathrm{AV}}^{2}(1-\omega)$ of $W_{\mathrm{AV}}${ (Eq.~(\ref{eq:m_UWU}))} are both positive{, \textit{i.e.}\ $W_{\mathrm{AV}}\succ0$}.
\end{proof}

\subsection{Proof of Proposition~\ref{prop:weighted_cr} (Eq.~\ref{eq:m_weighted_cr})}
\label{app:proof_cr}

\begin{proof}[Proof of Proposition~\ref{prop:weighted_cr}]
{Let $p_z$ satisfy the conditions of Proposition~\ref{prop:weighted_cr}, {namely} zero mean{,} covariance $\Sigma=\mathbb{E}_{\mathbf{z}\sim p_z}[\mathbf{z}\mathbf{z}^{\top}]\succ0${,} $p_z\in C^{3}$ with bounded derivatives up to order three, finite Fisher information and covariance, and $p_z$ everywhere positive on $\mathbb{R}^{2d}$.} {Under these conditions, the proof of Lemma~6 of LeJEPA \citep[Appendix~B.4]{lejepa} applies the matrix Cram\'er--Rao bound~\citep{cramer1946,rao1945} to the {family of shifted densities} $p_{\bm{\mu}}(\mathbf{z})=p_z(\mathbf{z}-\bm{\mu})${, whose Fisher information with respect to the shift $\bm{\mu}$ is $\mathcal{I}(p_z)$}. It yields $\mathcal{I}(p_z)\succeq\Sigma^{-1}$, with equality if and only if $p_z=\mathcal{N}(0,\Sigma)$.}

Multiplying $\mathcal{I}(p_z)-\Sigma^{-1}\succeq0$ by $W_{\mathrm{AV}}^{1/2}$ on both sides and taking traces,
\begin{equation}
 J_{W_{\mathrm{AV}}}(p_z)-\operatorname{tr}\!\left(W_{\mathrm{AV}}\Sigma^{-1}\right)
 =\operatorname{tr}\!\left(W_{\mathrm{AV}}^{1/2}\left(\mathcal{I}(p_z)-\Sigma^{-1}\right)W_{\mathrm{AV}}^{1/2}\right)\ \ge\ 0,
\end{equation}
which is Eq.~(\ref{eq:m_weighted_cr}). {If Eq.~(\ref{eq:m_weighted_cr}) holds with equality, {the positive semidefinite matrix} $W_{\mathrm{AV}}^{1/2}\left(\mathcal{I}(p_z)-\Sigma^{-1}\right)W_{\mathrm{AV}}^{1/2}$ {has zero trace and is therefore $0$}. Since $W_{\mathrm{AV}}\succ0$, the square root $W_{\mathrm{AV}}^{1/2}$ is invertible, so $\mathcal{I}(p_z)=\Sigma^{-1}$ and the equality condition above gives $p_z=\mathcal{N}(0,\Sigma)$.} Conversely, if $p_z=\mathcal{N}(0,\Sigma)$ then $\mathcal{I}(p_z)=\Sigma^{-1}$ and Eq.~(\ref{eq:m_weighted_cr}) holds with equality.
\end{proof}

\subsection{Proof of Proposition~\ref{prop:target} (Eq.~\ref{eq:m_target})}
\label{app:proof_target}

\begin{proof}[Proof of Proposition~\ref{prop:target}]
The proof has two steps.
\emph{Step 1 (a lower bound from the Cauchy--Schwarz inequality).} In the Frobenius inner product $\langle A,B\rangle_F=\operatorname{tr}(A^{\top}B)$, take $A=W_{\mathrm{AV}}^{1/2}\Sigma^{-1/2}$ and $B=\Sigma^{1/2}$. {By the cyclic property of the trace, $\langle A,B\rangle_F=\operatorname{tr}(\Sigma^{-1/2}W_{\mathrm{AV}}^{1/2}\Sigma^{1/2})=\operatorname{tr}(W_{\mathrm{AV}}^{1/2})$. We also have $\|A\|_F^{2}=\operatorname{tr}(W_{\mathrm{AV}}\Sigma^{-1})$ and $\|B\|_F^{2}=\operatorname{tr}(\Sigma)=2d$ by the trace constraint, so that}
\begin{equation}
 \left[\operatorname{tr}\!\left(W_{\mathrm{AV}}^{1/2}\right)\right]^{2}
 \le\operatorname{tr}\!\left(W_{\mathrm{AV}}\Sigma^{-1}\right)\operatorname{tr}(\Sigma),
 \qquad\text{that is}\qquad
 \operatorname{tr}\!\left(W_{\mathrm{AV}}\Sigma^{-1}\right)
 \ \ge\ \frac{\left[\operatorname{tr}\!\left(W_{\mathrm{AV}}^{1/2}\right)\right]^{2}}{2d}.
\end{equation}
Equality holds if and only if $A$ and $B$ are proportional, \textit{i.e.}\ $W_{\mathrm{AV}}^{1/2}\Sigma^{-1/2}=c\,\Sigma^{1/2}$, that is $\Sigma\propto W_{\mathrm{AV}}^{1/2}$; the trace constraint $\operatorname{tr}(\Sigma)=2d$ then fixes the constant,
\begin{equation}
 \Sigma^{*}=\frac{2d}{\operatorname{tr}\!\left(W_{\mathrm{AV}}^{1/2}\right)}\,W_{\mathrm{AV}}^{1/2}.
 \label{eq:m_sigma_prop}
\end{equation}

\emph{Step 2 (computing $W_{\mathrm{AV}}^{1/2}$ and normalizing).} We first diagonalize $W_{\mathrm{AV}}$: let
\begin{equation}
 U=\frac{1}{\sqrt2}\begin{bmatrix}I_d&I_d\\ I_d&-I_d\end{bmatrix},
 \qquad U^{\top}U=I_{2d},
\end{equation}
which replaces $\mathbf{z}=[\mathbf{e}_a;\mathbf{e}_v]$ by $\left[(\mathbf{e}_a+\mathbf{e}_v)/\sqrt2\,;\,(\mathbf{e}_a-\mathbf{e}_v)/\sqrt2\right]$. A direct computation gives
\begin{equation}
 U^{\top}W_{\mathrm{AV}}U
 =\tau_{\mathrm{AV}}^{2}\begin{bmatrix}(1+\omega)I_d&0\\ 0&(1-\omega)I_d\end{bmatrix},
 \label{eq:m_UWU}
\end{equation}
so $U$ diagonalizes $W_{\mathrm{AV}}$, the two eigenvalues $\tau_{\mathrm{AV}}^{2}(1+\omega)$ and $\tau_{\mathrm{AV}}^{2}(1-\omega)$ each having multiplicity $d$. Taking the square root in this basis and {transforming back to the original basis},
\begin{equation}
 \begin{aligned}
 W_{\mathrm{AV}}^{1/2}
 &=U\,\tau_{\mathrm{AV}}\begin{bmatrix}\sqrt{1+\omega}\,I_d&0\\ 0&\sqrt{1-\omega}\,I_d\end{bmatrix}U^{\top}\\[2pt]
 &=\frac{\tau_{\mathrm{AV}}}{2}
 \begin{bmatrix}
 \left(\sqrt{1+\omega}+\sqrt{1-\omega}\right)I_d & \left(\sqrt{1+\omega}-\sqrt{1-\omega}\right)I_d\\
 \left(\sqrt{1+\omega}-\sqrt{1-\omega}\right)I_d & \left(\sqrt{1+\omega}+\sqrt{1-\omega}\right)I_d
 \end{bmatrix}.
 \end{aligned}
\end{equation}
This can be checked by squaring: the diagonal blocks give $\tau_{\mathrm{AV}}^{2}[(1+\omega)+(1-\omega)]/2=\tau_{\mathrm{AV}}^{2}$ and the off-diagonal blocks give $\tau_{\mathrm{AV}}^{2}[(1+\omega)-(1-\omega)]/2=\tau_{\mathrm{AV}}^{2}\omega$. {Its trace is $\operatorname{tr}(W_{\mathrm{AV}}^{1/2})=d\,\tau_{\mathrm{AV}}\left(\sqrt{1+\omega}+\sqrt{1-\omega}\right)$, so $\tau_{\mathrm{AV}}$ and $d$ cancel in Eq.~(\ref{eq:m_sigma_prop}). Multiplying the numerator and denominator of $\rho$ by $\sqrt{1+\omega}+\sqrt{1-\omega}$ then gives}
\begin{equation}
 \Sigma^{*}(\rho)=\begin{bmatrix}I_d&\rho I_d\\ \rho I_d&I_d\end{bmatrix},
 \qquad
 \rho=\frac{\sqrt{1+\omega}-\sqrt{1-\omega}}{\sqrt{1+\omega}+\sqrt{1-\omega}}
 =\frac{\omega}{1+\sqrt{1-\omega^{2}}}.
 \label{eq:m_rho_pf}
\end{equation}
Both diagonal blocks $\Sigma^{*}_{aa},\Sigma^{*}_{vv}$ of Eq.~(\ref{eq:m_rho_pf}) are $I_d$, so $\operatorname{tr}(\Sigma^{*}_{aa})=\operatorname{tr}(\Sigma^{*}_{vv})=d$ {follows directly}: equal total variances for the two modalities {are} a consequence of this optimum, not an additional per-modality variance constraint.
\end{proof}

\end{document}